\newcommand\e{\mathrm{e}}
\newcommand\wt{\widetilde}
\newcommand\p{\partial}
\newcommand\Lcal{\mathcal{L}}
\theoremstyle{plain}
\newtheorem{theorem}{Theorem}[section]
\theoremstyle{definition}
\theoremstyle{remark}
\icmltitlerunning{Unification of Symmetries Inside Neural Networks:
Transformer, Feedforward and Neural ODE}
\begin{document}

\twocolumn[
\icmltitle{ Unification of Symmetries Inside Neural Networks: \\
Transformer, Feedforward and Neural ODE}



\icmlsetsymbol{equal}{*}

\begin{icmlauthorlist}
\icmlauthor{Koji Hashimoto}{equal,kyoto}
\icmlauthor{Yuji Hirono}{equal,kyoto}
\icmlauthor{Akiyoshi Sannai}{equal,kyoto}
\end{icmlauthorlist}

\icmlaffiliation{kyoto}{ Department of Physics, Kyoto University, Kyoto, Japan }

\icmlcorrespondingauthor{Koji Hashimoto}{koji@scphys.kyoto-u.ac.jp}
\icmlcorrespondingauthor{Yuji Hirono}{yuji.hirono@gmail.com}
\icmlcorrespondingauthor{Akiyoshi Sannai}{sannai.akiyoshi.7z@kyoto-u.ac.jp}

\icmlkeywords{Machine Learning, ICML}

\vskip 0.3in
]



\printAffiliationsAndNotice{\icmlEqualContribution} 

\begin{abstract}
%
%
Understanding the inner workings of neural networks, including transformers, remains one of the most challenging puzzles in machine learning.
This study introduces a novel approach by applying the principles of gauge symmetries, a key concept in physics, to neural network architectures. 
By regarding model functions as physical observables, we find that parametric redundancies of various machine learning models can be interpreted as gauge symmetries.
We mathematically formulate the parametric redundancies in neural ODEs, and find that their gauge symmetries are given by spacetime diffeomorphisms, which play a fundamental role in Einstein's theory of gravity. 
Viewing neural ODEs as a continuum version of feedforward neural networks, we show that the parametric redundancies in feedforward neural networks are indeed lifted to diffeomorphisms in neural ODEs. 
We further extend our analysis to transformer models, finding natural correspondences with neural ODEs and their gauge symmetries. 
The concept of gauge symmetries sheds light on the complex behavior of deep learning models through physics and provides us with a unifying perspective for analyzing various machine learning architectures. 
\end{abstract}

\section{Introduction}

Neural networks have revolutionized various domains in recent years, from image recognition to natural language processing, demonstrating unprecedented accuracy and efficiency \cite{krizhevsky2012imagenet,silver2017mastering,brown2020language}. Despite these advancements, the intricate workings of neural networks, particularly transformer models, remain elusive, often perceived as `black boxes.' This lack of transparency hinders our ability to fully understand, trust, and optimize these models, posing significant challenges in both research and practical applications. This study aims to unravel these complexities, offering insights into the internal mechanisms of neural networks.


%
\begin{figure}[t]
  \centering
\includegraphics[width=1\linewidth]{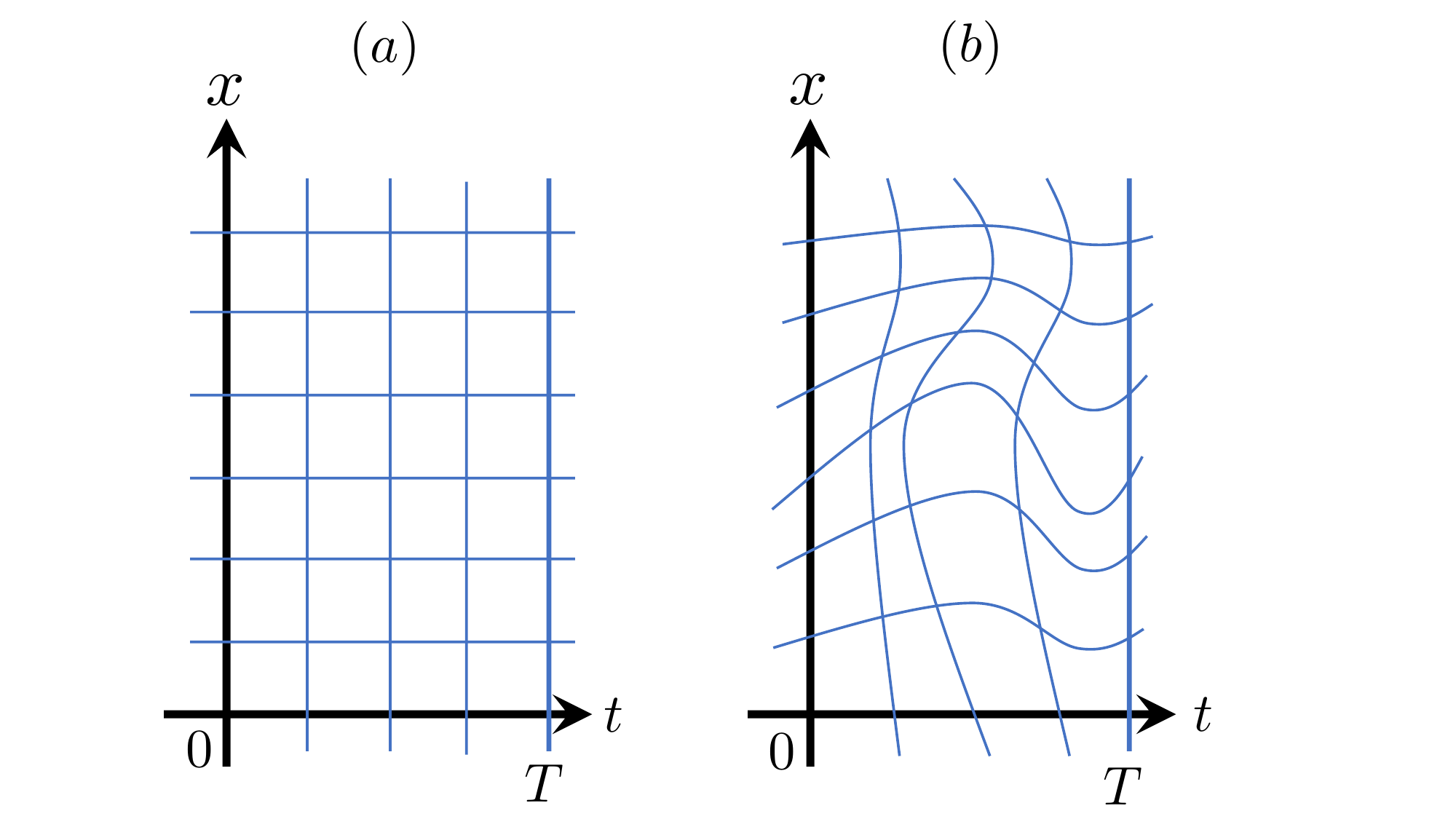}
  \vspace*{-8mm}
  \caption{Spacetime diffeomorphism used in proving Theorem \ref{thm:1}. (a) the original Cartesian spacetime grid. (b) A general coordinate transformation is performed on (a), with the boundary condition that at $t=0$ and at $t=T$ the transformation is trivial. }\label{fig:diffeo}
  \vspace*{-2mm}
\end{figure}
\begin{figure}[t]
\centering
\begin{tikzcd}[column sep=2cm, row sep=1.2cm]
{\rm NODE}_A 
\arrow[blue, leftrightarrow,swap]{d}{\text{diffeomorphism}}
\arrow[yshift=0.7ex]{r}{\text{discretize}}
& 
{\rm NN}_A 
\arrow[leftrightarrow]{d}{\text{param. rescaling}} 
\\
\color{blue}
{\rm NODE}_B
\color{black}
\arrow[blue,yshift=0.7ex]{r}{\text{discretize}}
& 
{\rm NN}_{B}
\end{tikzcd}
  \vspace*{-3mm}
\caption{Commutative diagram proven in Theorem \ref{thm:node-nn}. 
A pair of feedforward neural networks related by the weight rescaling symmetry
corresponds to a pair of neural ODEs related by the spacetime diffeomorphism.}
\vspace*{-2mm}
\label{fig:node-nn-diag}
\end{figure}
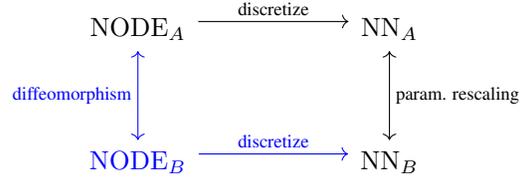

Building on the recognition of neural networks as complex systems akin to `black boxes,' we explore the potential of leveraging concepts from physics to demystify their inner workings. 
%
%
In physics, {\it gauge symmetries} represent a fundamental principle, encapsulating the invariance of physical observables under certain transformations of degrees of freedom.
Such gauge transformations are typically characterized by parameters depending on spacetime coordinates. 
This principle is exemplified by spacetime diffeomorphism invariance in general relativity \cite{Einstein:1916vd}, where physical laws remain consistent regardless of coordinate transformations. 
%
%
%
Regarding model functions of machine learning models 
as physical observables, parametric redundancies of machine learning models can be interpreted as 
gauge symmetries. 
In other words, gauge symmetries of neural networks 
refer to the transformations of weight/bias parameters 
that leave the model output intact.
%
One example of gauge symmetries is the rescaling symmetry
\cite{neyshabur2015path}\footnote{See also \cite{ioffe2015batch,nair2010rectified,badrinarayanan2015understanding} for a rescaling symmetry due to batch normalization, and \cite{kunin2020neural} for learning dynamics.} 
of feedforward neural networks, which refers to the scaling transformations of certain weights with which the input-output relationship is unchanged.
We can identify gauge symmetries of 
transformers \cite{vaswani2017attention},  
as a generalized type of parameter redundancy 
akin to the rescaling symmetry of feedforward neural networks.

%
We investigate the gauge symmetries 
of neural Ordinary Differential Equations (neural ODEs) \cite{chen2018neural}, which can be regarded as a continuous version of neural networks.
We show that the gauge symmetries of neural ODEs are mathematically characterized by ``spacetime''\footnote{The space and time correspond to the data space and depth of a neural network, respectively.}
diffeomorphisms (Theorem \ref{thm:1}, Fig.~\ref{fig:diffeo}, Sec.~\ref{sec3}). 
%
Then, we study the relation between the gauge symmetries of 
discrete feedforward neural networks and the neural ODEs.
%
Unlike the conventional approach of replacing derivatives with differences \cite{lu2018beyond,avelin2021neural}, our study introduces a new integrated relation where a time integration of the solution of neural ODEs directly corresponds to the inter-layer mapping of discrete neural networks. 
This relation is also effective for self-attention layers in transformers, linking them to nonlinear neural ODEs. 
By applying this integrated relation to feedforward neural networks and transformers, we reveal the correspondence between the rescaling symmetry and spacetime diffeomorphisms (Theorem \ref{thm:node-nn}, Fig.~\ref{fig:node-nn-diag}, Sec.~\ref{sec4} and Sec.~\ref{sec5}).

In this perspective, the internal symmetries of neural networks and transformers are unified as spacetime diffeomorphisms.
This in fact is consistent with the interpretation that
the neural network itself can be regarded as a spacetime.
There has been a growing research arena on the holographic principle \cite{Maldacena:1997re} for formulating a quantum gravity theory, where the gravitational spacetime itself is identified with a deep neural network
\cite{you2018machine,hashimoto2018deep}\footnote{See also subsequent development in \cite{hashimoto2018deep2,hashimoto2019ads,vasseur2019entanglement,tan2019deep,akutagawa2020deep,hu2020machine,yan2020deep,hashimoto2021neural,lam2021machine,song2021ads,yaraie2021physics,li2023learning}.}.  
Identifying the gauge symmetries of deep neural networks as spacetime diffeomorphisms is natural in the context, and this work builds a bridge connecting the symmetry of Einstein's theory of gravity with machine learning\footnote{As we will argue in Sec.~\ref{sec6}, the diffeomorphism in the linear neural ODE case acts on the weights and the biases as if they are a gauge field and a Higgs field in quantum field theories, respectively, which provides another bridge.}.


The main contributions of this work are summarized as follows.
\begin{itemize}
\item By regarding model functions as physical observables, we find that parametric redundancies of neural networks and neural ODEs are interpreted as gauge symmetries in physics.
\item We discover the mathematical characterization of the gauge symmetries in general neural ODEs (Theorem \ref{thm:1}, Fig.~\ref{fig:diffeo}), and they are given by spacetime diffeomorphisms.
\item We find that the parametric redundancy of feedforward neural networks is lifted to the spacetime diffeomorphism in neural ODEs, that can be seen as a continuum limit of feedforward neural networks (Theorem \ref{thm:node-nn}, Fig.~\ref{fig:node-nn-diag}).
\item We identify neural ODEs that naturally correspond to transformers, and show that their gauge symmetries can be analyzed in parallel with the cases mentioned above.
\end{itemize}



%


\section{Gauge symmetries in neural ODEs}
\label{sec3}

We here study parametric redundancies of generic neural ODEs. 
We will show that the redundancies in neural ODEs are 
characterized by spacetime diffeomorphisms with certain properties.

\subsection{General neural ODEs }

We consider a generic neural ODE~\cite{chen2018neural} given by 
\begin{equation}
\dot x^i (t) = F^i (t, x(t)),
\label{eq:eom-vec}
\end{equation}
where $x(t) \in \mathbb R^d$, 
the dot denotes the time derivative,
and $F^i$ is a force which can have $t$-dependent parameters.
We assume that $F^i$ is continuous in $t$ and globally Lipschitz continuous in $x$, that ensures the existence and uniqueness of the solution to an initial value problem.
We further assume that $F^i$ is smooth in $t$ and $x$.
For a given input $z \in \mathbb R^d$, the output of a neural ODE is given by 
$x(t=T)$, which is obtained by solving Eq.~\eqref{eq:eom-vec} 
with the initial condition $x(t=0) = z$. 
In the training of a neural ODE, the vector field on the RHS of Eq.~\eqref{eq:eom-vec} is optimized to minimize the loss function 
determined by the input-output relation (in the absence of other regularizers).
Generically, there can be different vector fields $F^i$ and $F'^i$ 
that give rise to the same input-output relation. 
Such neural ODEs are said to be equivalent. 
Indeed, there are infinitely many parametrizations of neural ODEs that are equivalent to each other. 
We will refer to such redundancies as {\it gauge symmetries}. 
In the following, we show that the gauge symmetry 
can be identified with the spacetime diffeomorphisms, that are fundamental gauge symmetries in Einstein's theory of gravity. 

%


\subsection{ Gauge symmetries in neural ODEs are spacetime diffeomorphisms }

To identify gauge symmetries in a neural ODE, it is convenient to formally extend the variables and forces to include the time variable as 
\begin{equation}
x^\mu (s)  
\coloneqq 
\begin{pmatrix}
    t(s) \\ x^i (s)
\end{pmatrix} ,
\quad
F^\mu (s, x(s))  
\coloneqq
\begin{pmatrix}
1
\\ 
F^i (s, x(s))
\end{pmatrix} .    
\end{equation}
The ODE~\eqref{eq:eom-vec} is equivalent to 
\begin{equation}
\frac{d}{ds} x^\mu (s)
 = 
 F^\mu (s, x(s)) .
 \label{eq:node-ex}
\end{equation}
The integration of the first equation results in $t(s) = s$. 
We slightly generalize the system to allow 
the zero-th component $F^0$ of $F^\mu$ to be different from $1$ 
while satisfying the condition that $t(0)=0$ and $t(T) = T$
and $t(s)$ is invertible. 

Let us prepare some terminologies to state our result.
Let $\Omega_0 \subset \mathbb R^d$ be the domain of a 
neural ODE (i.e., $x^i (0) \in \Omega_0$).
We define the support $\Omega$ of a neural ODE
to be the subspace of $\mathbb R^{d+1}$ in which
$x^\mu (s) \neq 0$ 
for 
some $x^\mu (0) \in \{0\} \times \Omega_0$
and 
for some $s \in I$.
When we denote the image of $\Omega_0$ by the time evolution to $T$ by $\Omega_T$ (i.e., $\Omega_T = \{x(T) \in \mathbb R^d \,\, | \,\, x(0) \in \Omega_0 \}$), 
the boundary of $\Omega$ is given by 
$\p \Omega = (\{0\} \times \Omega_0) \cup \overline{(\{T \} \times \Omega_T)}$, where the overline indicates the orientation reversal.
It is convenient to use the following notation for vector fields,
%
\begin{equation}
\hat x \coloneqq t \, \p_s + \sum_i x^i \p_i , 
\quad 
\hat F \coloneqq F^0 \p_s + \sum_i F^i \p_i ,
\end{equation}
We can express Eq.~\eqref{eq:node-ex} as
\begin{equation}    
\frac{d}{ds} \hat x(s) = \hat F (t,x) . 
\label{eq:node-vec-field}
\end{equation}

Observe that a trajectory $\{ x^\mu(s) \}_{s \in I}$
can be regarded as an image of $I$ in $\mathbb R^{d+1}$
by a map,
\begin{equation}
I \ni s \mapsto x (s) \in \mathbb R^{d+1}.
\end{equation}
This map is denoted as $\phi$. 
Under the current assumptions, the solution of Eq.~\eqref{eq:node-vec-field} is unique for an initial condition,
and a neural ODE can be seen as a collection of the images of $I$ in $\mathbb R^{d+1}$ that do not intersect with each other.
We define an infinitesimal deformation of a neural ODE generated by an infinitesimal vector field 
$\hat \epsilon = \sum_\mu \epsilon^\mu \p_\mu$
by the infinitesimal reparametrization of these trajectories along $\hat \epsilon$,
\begin{equation}
x^\mu (s) \mapsto x^\mu (s) +  \epsilon^\mu (x (s)) .
\end{equation}
Note that $\epsilon^\mu$ can depend on space-time variables $(t, x^i)$.
Such a transformation is called a general coordinate transformation in the theory of general relativity.
Let us identify the corresponding change of $\hat F$ under this reparametrization.
Substituting the reparametrized solution to Eq.~\eqref{eq:node-vec-field},
\begin{equation}
\frac{d}{ds} 
(\hat x + \hat \epsilon)
= 
\hat F (x + \epsilon) .
\end{equation}
By noting the relations
$\hat F (x + \epsilon) = \hat F(x) + \sum_\mu \epsilon^\mu \p_\mu \hat F(x)$ and $\p_s \hat \epsilon (x) =\sum_\mu \p_s x^\mu \p_\mu \hat\epsilon = \sum_\mu F^\mu \p_\mu \hat \epsilon$ to the first order in $\hat\epsilon$,
we can see that the new $\hat x$ should satisfy the following ODE,
%
\begin{equation}
\frac{d}{ds} \hat x (s)
= 
\hat F' 
\quad 
\text{with}
\quad 
\hat F'
\coloneqq 
\hat F 
+ 
\Lcal_{\hat \epsilon} \hat F,
\label{eq:node-deformation}
\end{equation}
where $\Lcal_{\hat \epsilon}$ is 
the Lie derivative along $\hat \epsilon$.
Namely, $\hat F'$ represents a new neural ODE obtained as a deformation of an original neural ODE given by
$\hat F$.

We here argue that that a certain class of spacetime diffeomorphisms
provide the essential characterization of gauge symmetries in neural ODEs.
Specifically, we show the following claim: 
\begin{theorem}
\label{thm:1}
Infinitesimal deformations
of a neural ODE~\eqref{eq:node-ex} 
that do not change the input-output relation
are in one-to-one correspondence with 
with infinitesimal diffeomorphisms 
$\Omega
\to \mathbb R^{d+1}$
that preserve the boundary $\p \Omega$ 
and the monotonicity of $t(s)$. 
\end{theorem}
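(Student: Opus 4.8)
The plan is to establish the correspondence in two directions and then verify it is a bijection. First I would show that an infinitesimal diffeomorphism $\hat\epsilon$ preserving $\partial\Omega$ and the monotonicity of $t(s)$ produces a deformation that leaves the input-output relation intact. The key observation is that the reparametrization $x^\mu(s)\mapsto x^\mu(s)+\epsilon^\mu(x(s))$ only relabels points along each trajectory together with moving the trajectory as a geometric object in $\mathbb R^{d+1}$; as computed in Eq.~\eqref{eq:node-deformation}, the new force is $\hat F'=\hat F+\Lcal_{\hat\epsilon}\hat F$. I would argue that the set of trajectory images (integral curves) of $\hat F'$ is precisely the pushforward by $\mathrm{id}+\hat\epsilon$ of those of $\hat F$. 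Because $\hat\epsilon$ vanishes on $\{0\}\times\Omega_0$ and on $\{T\}\times\Omega_T$ (this is what ``preserves $\partial\Omega$'' means at the infinitesimal level), each trajectory has the same endpoints at $t=0$ and $t=T$ before and after the deformation; and because $t(s)$ remains monotone, the map ``value at $t=0$'' $\mapsto$ ``value at $t=T$'' along trajectories is well-defined and unchanged. Hence the input-output map $z\mapsto x(T)$ is invariant.

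Conversely, I would show that any input-output-preserving infinitesimal deformation $\delta\hat F$ arises this way. Here the strategy is to reconstruct $\hat\epsilon$ from $\delta\hat F$. Given the deformed force $\hat F+\delta\hat F$, solve its flow with the same initial data on $\{0\}\times\Omega_0$; by hypothesis the resulting endpoints on $\{T\}\times\Omega_T$ agree with the undeformed flow. For each initial point $z$, both the old and new trajectories are curves in $\mathbb R^{d+1}$ with the same start and end; since trajectories foliate $\Omega$ (no intersections, by uniqueness), I can define $\hat\epsilon$ at a point $x^\mu(s)$ on the old trajectory through $z$ as the displacement to the corresponding point on the new trajectory. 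The corresponding point is pinned down by matching the $s$-parameter, and one checks this assignment is smooth, single-valued on $\Omega$, vanishes on $\partial\Omega$, and preserves monotonicity of $t$ (since both flows have $t(0)=0$, $t(T)=T$ with $t$ invertible). Feeding this $\hat\epsilon$ back through the computation leading to Eq.~\eqref{eq:node-deformation} recovers $\delta\hat F=\Lcal_{\hat\epsilon}\hat F$, so every admissible deformation is realized by a diffeomorphism.

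Finally I would argue the correspondence is one-to-one by identifying its kernel. Two diffeomorphisms $\hat\epsilon_1,\hat\epsilon_2$ induce the same deformation iff $\Lcal_{\hat\epsilon_1-\hat\epsilon_2}\hat F=0$, i.e.\ $\hat\epsilon_1-\hat\epsilon_2$ generates a flow commuting with that of $\hat F$. Such a vector field maps integral curves of $\hat F$ to themselves, hence is determined by a reparametrization along each curve; but vanishing on $\{0\}\times\Omega_0$ forces this reparametrization to be trivial, so $\hat\epsilon_1=\hat\epsilon_2$ on $\Omega$. (Depending on the precise bookkeeping convention for ``deformation of a neural ODE'', one may instead want to declare two $\hat\epsilon$'s equivalent exactly when they differ by such a curve-preserving field, in which case the statement is a bijection after this quotient; I would make this identification explicit.) The main obstacle I expect is the converse direction: showing that the naive pointwise displacement between old and new trajectories assembles into a \emph{globally well-defined, smooth} vector field on all of $\Omega$ — this requires using the non-intersection of trajectories (uniqueness from the Lipschitz assumption), smoothness of the flow in initial conditions and parameters (from the assumed smoothness of $F$), and the monotonicity of $t(s)$ to consistently match the $s$-parametrizations of the two foliations near the $t=0$ and $t=T$ boundaries where $\hat\epsilon$ degenerates.
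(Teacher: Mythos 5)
Your proposal is correct, and it is substantially more thorough than the paper's own argument. The paper proves the theorem by a short variational computation: it writes the output as $\int_{\phi(I)}dx^\mu$, applies Cartan's magic formula $\Lcal_{\hat\epsilon}=d\,i_{\hat\epsilon}+i_{\hat\epsilon}\,d$ to obtain $\delta\int_{\phi(I)}dx^\mu=\epsilon^\mu(x(T))-\epsilon^\mu(x(0))$, and concludes that the boundary conditions $\epsilon^\mu|_{s=0}=\epsilon^\mu|_{s=T}=0$ are necessary and sufficient for a deformation generated by a vector field $\hat\epsilon$ to preserve the input-output map. It does not address the two directions on which you spend most of your effort: surjectivity (that every I/O-preserving perturbation $\delta\hat F$ of the force arises as $\Lcal_{\hat\epsilon}\hat F$ for some admissible $\hat\epsilon$) and injectivity (that distinct admissible $\hat\epsilon$ give distinct deformations). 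Your reconstruction of $\hat\epsilon$ as the $s$-matched displacement between the old and new trajectories, together with your kernel analysis, is exactly what is needed to make the phrase ``one-to-one correspondence'' literal; the paper implicitly restricts attention to deformations already presented as reparametrizations along a vector field, for which the question reduces to the boundary conditions. One small repair in your injectivity step: a vector field commuting with $\hat F$ need not map each integral curve to itself, so it is not ``a reparametrization along each curve''; the correct statement is that $\Lcal_{\hat\epsilon}\hat F=0$ is equivalent to $\Lcal_{\hat F}\hat\epsilon=0$, which makes $\hat\epsilon$ invariant under the flow of $\hat F$, so that $\hat\epsilon(x(s))$ is the pushforward of $\hat\epsilon(x(0))$ and vanishes everywhere on $\Omega$ once it vanishes on $\{0\}\times\Omega_0$ --- the conclusion you want then follows immediately. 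What the paper's route buys is brevity and a clean derivation of the boundary conditions via Stokes; what yours buys is the actual bijection, at the cost of the smoothness and well-definedness verifications (non-intersection of trajectories, smooth dependence on initial data, consistent $s$-matching near the boundary slices) that you correctly flag as the main technical burden.
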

\begin{proof}
The value $x^\mu (s=T)$ can be obtained 
by the integral of a one-form $dx^\mu$ 
over the image of $I$ in $\mathbb R^{d+1}$ by $\phi$,
\begin{eqnarray}
\lefteqn{
x^\mu (s=T )
- x^{\mu}(s=0)
}
\nonumber\\
&=&
\int_{\phi (I) } d x^\mu 
=
\int_I \phi^\ast ( d x^\mu )
= 
\int_I \frac{d x^\mu}{ds} ds ,
\end{eqnarray}
where $\phi^\ast$ is the pull-back of $\phi$.

Since we require that the input data are not transformed, 
the vector field should satisfy 
\begin{equation}
\epsilon^\mu (x (s=0)) = 0. 
\label{eq:epsilon-0}
\end{equation}
To keep the output $x^\mu (s=T)$ intact, 
the following integral should be 
invariant under the deformation along the vector field $\hat \epsilon$, which can be generated by the Lie derivative (see Eq.~\eqref{eq:node-deformation}),
\begin{equation*}
\begin{split}    
\delta \int_{\phi(I)} d x^\mu 
&= 
\int_{\phi(I)} \Lcal_{\hat \epsilon} d x^\mu  
= 
\int_{\phi(I)} d i_{\hat\epsilon} d x^\mu 
= 
\int_{\phi(I)} d \epsilon^\mu 
\\
&=
\epsilon^\mu (x (s = T))
- 
\epsilon^\mu (x (s = 0)).
\end{split}
\end{equation*}
Here, we used Cartan's magic formula, 
$\Lcal_{\hat\epsilon} = d i_{\hat\epsilon} + i_{\hat\epsilon} d$,
where $d$ is the exterior derivative
and $i_{\epsilon}$ denotes the interior product.
Combined with the condition~\eqref{eq:epsilon-0},
we find that the vector field $\epsilon$ should satisfy 
\begin{equation}
\epsilon^\mu (x (s = T)) = 0 .
\label{eq:epsilon-T}
\end{equation}
The conditions \eqref{eq:epsilon-0} and \eqref{eq:epsilon-T} are necessary so that the input-output relation of the model is unchanged. 
On the other hand, the transformation satisfying these conditions preserve the values of 
$x^\mu (s=0)$ and $x^\mu (s=T)$, which provides the sufficiency of these conditions to be a gauge symmetry.
Thus, we find that Eqs.~\eqref{eq:epsilon-0} and \eqref{eq:epsilon-T} 
provides the necessary and sufficient condition 
for the deformation generated by $\epsilon$ 
to be the gauge symmetry of the model. 
This concludes the proof. 
\end{proof}

Let us look at special cases more closely: a {\it spatial} diffeomorphism and {\it time reparametrization}, see Fig.~\ref{fig:ADM}.\footnote{In physics, a general coordinate transformation is decomposed into these, 
according to the ADM (Arnowitt-Deser-Misner) decomposition \cite{arnowitt1959dynamical}; the whole spacetime is made of equal-time slices each of which is parameterized by $x$.
Since neural ODEs treat the time $(t)$ direction special against the space (data $x$) direction, 
it is natural to consider these two special cases.}
The first case is a spatial diffeomorphism, which corresponds to the following reparametrization of the solution,
\begin{equation}
t \mapsto t, 
\quad 
x^i \mapsto 
x^i +  \epsilon^i (t, x),
\label{xi+epi}
\end{equation}
where $\epsilon$ is an infinitesimal parameter such that 
$\epsilon^i (t = 0, x) =
\epsilon^i (t = T, x) =0$.
Let us identify the new ODE. 
Since the combination $x^i + \epsilon^i$ satisfies 
Eq.~\eqref{eq:eom-vec}, 
\begin{equation}
\frac{d}{dt}    
\left( x^i + \epsilon^i \right)
= 
F^i (t, x + \epsilon). 
\end{equation}
To the first order in $\epsilon$, 
\begin{equation}
\dot x^i + 
\sum_j 
\epsilon^{i}_{\,\,,j}
\, \dot x^j 
+ \p_t \epsilon^i
= F^i + \sum_j  F^{i}_{\,\, ,j} \, \epsilon^j ,
\end{equation}
where ${(\cdot)}_{,j}$ denotes 
the derivative with respect to $x_j$,
$F^i{}_{,j} \coloneqq \frac{\p}{\p x^j} F^i$.
Thus, the new ODE is given by 
\begin{equation}    
\dot x^i
= 
F'^i 
= 
F^i + \sum_j F^{i}{}_{,j} \epsilon^j 
- \sum_j \epsilon^{i}_{\,\,,j} F^j - \p_t \epsilon^i. 
\end{equation}

As another special case, we look at time reparametrization,
\begin{equation}
t \mapsto t + \epsilon^0, 
\quad 
x^i \mapsto x^i. 
\end{equation}
When the component $\epsilon^0$ is nonzero,
the condition $t(s)=s$ is not satisfied after the transformation. However, we can always recover this condition by a time rescaling.
Let us take a space-independent time reparametrization for example.
The only nonzero component of the vector field generating this transformation is $\epsilon^0 (s)$ and it only depends on $s$.
This vector field modifies the ODE as 
\begin{align}
\p_s t(s) &= 1  - \p_s \epsilon^0 , \\
\p_s x^i (s) &= F^i + \p_s F^i \epsilon^0 . 
\end{align}
In this equation, $t(s) = s$ is not satisfied.
We can recover this relation by a time rescaling, 
and this rescaling can be done by writing $s$ derivative 
in terms of $t$ derivative, 
which results in 
\begin{equation}
\p_t x^i = F^i + F^i \dot{\epsilon}^0 + \p_t F^i \epsilon^0. 
\end{equation}
This is the modified ODE under a time reparametrization that is space-independent.

\begin{figure}[tb]
  \centering
 \includegraphics[width=1\linewidth]{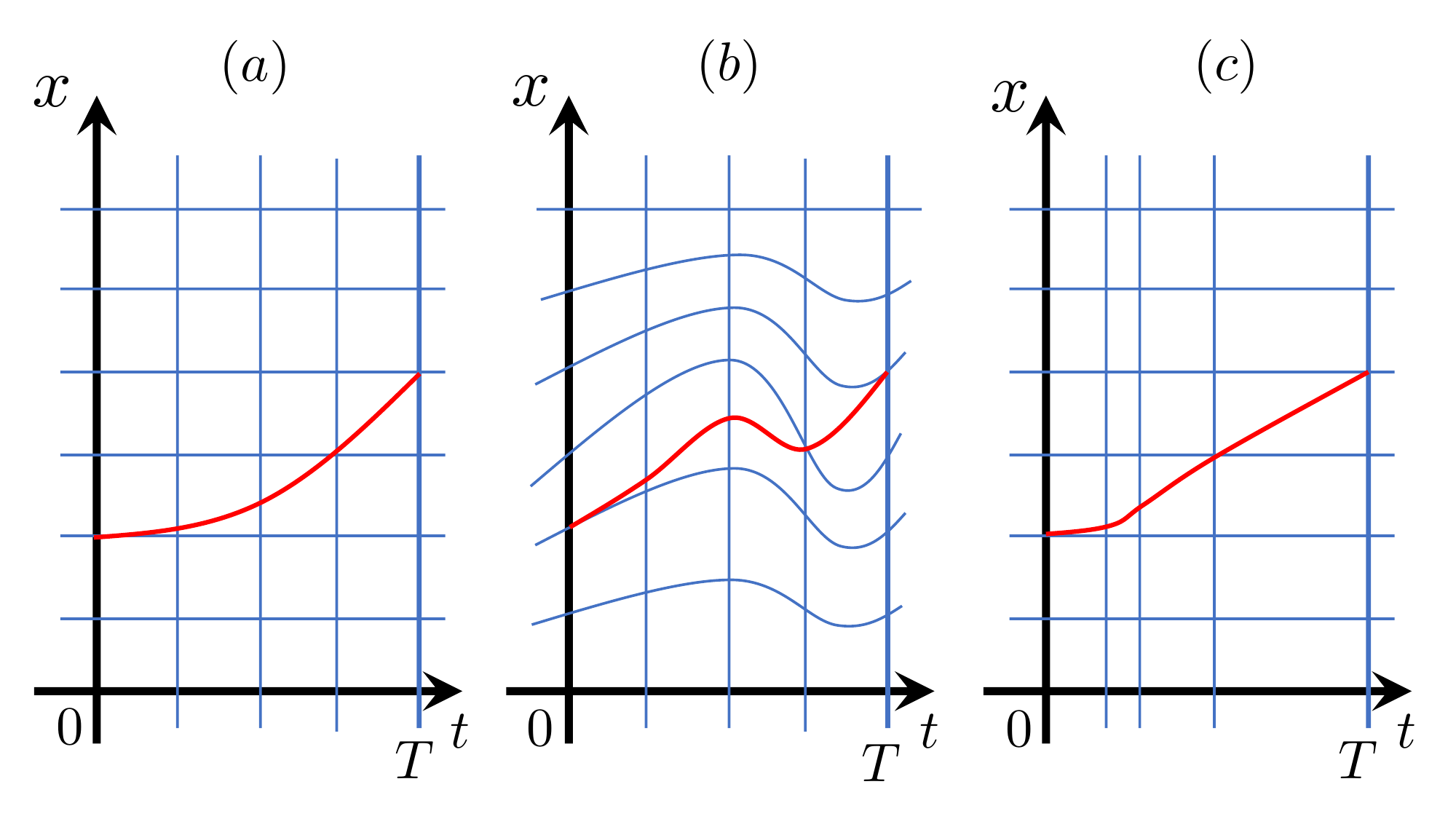}
 \vspace{-10mm}
   \caption{Spacetime diffeomorphism and their decompositions. Red line is the same trajectory $x(t)$. (a) the original spacetime grid. (b) A spatial diffeomorphism is performed on (a). (c) A time reparametrization is performed on (a).}
   \label{fig:ADM}
     \vspace*{-3mm}
\end{figure}

\subsection{Linear neural ODEs }

In order to make a connection to discrete neural networks, let us study gauge symmetries in linear neural ODEs in more detail.
We consider a multivariate linear ODE given by
\begin{equation}
\dot x^i(t)  = \sum_j w^i_{\,\, j}(t) x^j(t) + b^i(t),
\label{eq:dot-x-wx-b2}
\end{equation}
and hereafter we use the matrix notation and omit the indices $i$ and $j$ unless otherwise specified.
The mapping $\mathbb R^{d} \ni x(t=0) \mapsto x(t=T) \in \mathbb R^d$ gives 
the input and output of the neural ODE.
There can be difference choices of $\{w(t), b(t)\}_{t \in [0,T]}$ 
with the same input-output relation. 
Namely, there is a gauge symmetry, a redundancy in the choice of parameters. 
%
The explicit solution of this ODE is given by 
\begin{equation}
x(t=T)   
= W(w)_{T:0} 
\left(
x(0)
+ 
\int_0^T W(-w)_{0:t'} \, b(t') d t'
\right) .
\label{eq:lin-node-sol}
\end{equation}
Here, we have defined
\begin{equation}
W(w)_{t_1:t_2}
\coloneqq 
\begin{cases}
{\rm T}\, \e^{\int^{t_1}_{t_2} w(t') d{t'}}, &  t_1 \ge t_2 \\ 
\bar {\rm T}\, \e^{\int^{t_1}_{t_2} w(t') d{t'}}, & t_1 < t_2 
\end{cases}
\label{eq:wilson-line}
\end{equation}
where ${\rm T}$ and $\bar{\rm T}$
denote time ordering and reverse-time ordering, respectively. 
%


The gauge symmetry of this system can be found by 
reparametrizing the solution as 
%
\begin{equation}
x(t) \mapsto \wt{x}(t) = G(t) x(t) + c(t),
\label{eq:reparam-lin-node}
\end{equation}
where $G(T) = G(0) = 1$ and $c(0)=c(T)=0$.
While we discussed infinitesimal transformation in the previous section, here we consider finite transformations, that can be obtained by repeating infinitesimal ones.
Under this diffeomorphism,
the new $x(t)$ has the same initial and final values 
as the original $x(t)$. 
After the replacement, 
while $\wt{x}(t)$ satisfies the original ODE, 
the ODE governing the time evolution of $x(t)$ is now different. 
A direct substitution shows that the ODE 
governing the time evolution of new $x(t)$ 
can be obtained by the following transformation of weights:
\begin{equation}
\begin{split}    
w(t) &\mapsto G^{-1}(t) w(t) G(t) - G^{-1}(t) \dot G (t), 
\\
b(t) &\mapsto 
G^{-1}(t) \left[ b(t) + w(t) c(t) - \dot c(t) \right] .
\end{split}
\label{eq:w-b-transfm}
\end{equation}
One can check that 
the transformation
\eqref{eq:w-b-transfm} keeps 
$x(t=T)$ intact, using the explicit form
\eqref{eq:lin-node-sol}. 
Thus it is a gauge symmetry of the linear neural ODE.

An amusing similarity to fundamental physics is
that the transformation
by $G(t)$ on $w(t)$ has precisely the same form as that of
the gauge transformation in non-Abelian gauge theories. 
In addition, Eq.~\eqref{eq:wilson-line} is an analogue of open Wilson lines in non-Abelian gauge theories, regarding the
weight as a non-Abelian gauge field and the data $x$ or the bias $b$ as a quark field or a Higgs field in the fundamental representation.
This physical analogy will be used for discussing regularizations in Sec.~\ref{sec6}.

Note that the reparametrization~\eqref{eq:reparam-lin-node} includes infinitesimal time reparametrizations as special cases,
\begin{equation}
t \mapsto t' = t + \epsilon(t),
\end{equation}
where $\epsilon(t=0) = \epsilon(t=T) = 0$. 
An infinitesimal time reparametrization is realized 
through the following substitution,
\begin{equation}
\begin{split}    
x(t) 
&\mapsto 
x(t + \epsilon) 
\\
&= 
x(t) + \dot x(t) \epsilon (t) + O(\epsilon^2)
\\
&= 
\left[ 1 + \epsilon (t) w(t) \right]
x(t) + \epsilon(t) b(t) + O(\epsilon^2) ,
\end{split}
\end{equation}
where $\epsilon(t=0) = \epsilon(t=T) = 0$. 
Thus, the time reparametrization 
corresponds to the choice 
$G(t) = 1 + \epsilon(t) w(t)$
and 
$c(t) = \epsilon(t) b(t)$.

\section{Rescaling symmetry as a spacetime diffeomorphism}
\label{sec4}

We here study the connection between the rescaling symmetry  of deep feedforward neural networks and the spacetime diffeomorphism.
A neural ODE can be thought of as a continuum version of a deep feedforward neural network.
Through this correspondence, the rescaling symmetries of discrete neural networks are ``lifted" to the gauge symmetries characterized by the spacetime diffeomorphisms in the neural-ODE description.

\subsection{Review of the rescaling symmetry in feedforward neural networks}

Here we describe the rescaling symmetry in feedforward neural networks.\footnote{For convolutional neural networks, see App.~\ref{appB}.}
Consider a feedforward deep neural network 
\begin{align}
    x^{(n+1)}_i = \varphi\left(\sum_j w^{(n)}_{ij}x^{(n)}_j + b^{(n)}_i\right)
    \label{dnndef}
\end{align}
where $n$ is the index for the layers in the depth direction, and $i$ is the label of the unit in the layer. We suppose that every layer shares the same number of units. We take the activation function $\varphi(x) = {\rm ReLU}(x)$.\footnote{The leaky ReLU could also be used. See \cite{godfrey2022symmetries} for a classification of the symmetry for various activations.}
When the network has successive layers, the successive propagation in the network is 
\begin{align}
    &x^{(n+2)}_i 
    \nonumber \\
    &= \varphi\left(
    \sum_j w^{(n+1)}_{ij}
    \varphi\left(\sum_k w^{(n)}_{jk}x^{(n)}_k + b^{(n)}_j\right)+ b^{(n+1)}_i
    \right).
    \nonumber 
\end{align}
To describe the rescaling symmetry, let us consider the input-output relation for these successive layers. The input data is $x^{(n)}_k$ for all $k$. The output data is $x^{(n+2)}_i$ for all $i$. 
For the network output $x^{(n+2)}_i$ to be intact for any $i$ as a function of the input $\{x^{(n)}_k\}$, we are allowed to perform the following transformation on the weights and the biases, 
\begin{align}
\begin{split}
    w^{(n)}_{jk}  \mapsto \alpha^{(n+1)}_j w^{(n)}_{jk}, \quad
    b^{(n)}_{j}  \mapsto \alpha^{(n+1)}_j b^{(n)}_{j},
    \\
    w^{(n+1)}_{ij} \mapsto (\alpha^{(n+1)}_j)^{-1} w^{(n+1)}_{ij}, \quad
    b^{(n+1)}_{j}  \mapsto b^{(n+1)}_{j} 
\end{split}
\label{gsdnn}
\end{align}
for all $i$ and $k$, with any fixed $j$ and $n$. This keeps the output $x^{(n+2)}_i$ intact thanks to the scaling property of ReLU,
\begin{align}
 {\rm ReLU}(\alpha x) = \alpha \, {\rm ReLU}(x)
 \label{scalerelu}
\end{align}
for any $\alpha\in {\mathbb R}_{>0}$ and $x \in {\mathbb R}$. Thus
the network output is invariant for any 
$\alpha^{(n+1)}_j \in \mathbb{R}_{>0}$, this is a gauge symmetry of the feedforward neural network.\footnote{
The transformations \eqref{gsdnn} are local, meaning that they only affect the weights connected to the unit labeled by $(n+1,j)$.
This concept is similar to gauge symmetries in physics, where transformations are dependent on spacetime coordinates and are considered local for this reason.
From this perspective, the rescaling symmetry in neural networks can be naturally understood as a gauge symmetry.
}

\subsection{Diffeomorphism interpretation}

We here demonstrate that the rescaling symmetries 
in deep feedforward neural networks are promoted to spacetime diffeomorphisms
in their continuous counterpart, neural ODEs.
We show this connection explicitly using linear neural networks, which can be seen as a discrete version 
of the linear neural ODE~\eqref{eq:reparam-lin-node}. 
Here, rather than the popularly known discretization relation by replacing derivatives with differences \cite{lu2018beyond}, we use an integrated relation between the linear neural network and the linear neural ODE.
The output of each layer is given by 
\begin{equation}
x(n+1)
= 
\bar w_n x(n) + \bar b_n ,
\label{eq:linear-nn-rec}
\end{equation}
where $n = 0, \ldots, N-1$,
and 
the parameters $\bar w_n$ and $\bar b_n$ are expressed 
in terms those of the neural ODE as 
\begin{align}
&\bar w_n = {\rm T} \, \e^{\int_{n\Delta}^{(n+1)\Delta} w(t') d t' },
\label{eq:params-nn-node}
\\
&
\bar{b}_n = 
{\rm T} \, \e^{\int^{(n+1)\Delta}_{n\Delta} w(t') d{t'}}\!\!
\int_{n\Delta}^{(n+1)\Delta }\!\!\!\!
{\rm \bar T}\e^{ - \int_{n\Delta}^{t'} w(t'') d t''}
b(t') d t' .
\nonumber
\end{align}

In this linear neural network, the input-output relation is invariant under the 
following transformation of parameters,
\begin{equation}
\begin{split}    
\bar w_n &\mapsto (G_{n+1})^{-1} \bar w_n G_n,
\\
\bar b_n &\mapsto (G_{n+1})^{-1}
\left[ \bar b_n + \bar w_n c_n - c_{n+1} \right] ,
\end{split}
\label{eq:lin-nn-gauge}
\end{equation}
where 
$G_n$ are $d \times d$ invertible matrices with 
$G_{N} = G_{0} = 1$, 
and 
$c_n \in \mathbb R^d$ with $c_{N} = c_{0} = 0$.
One can check the invariance of $x(N)$ under 
the transformations~\eqref{eq:lin-nn-gauge}
by using the explicit solution of Eq.~\eqref{eq:linear-nn-rec},
\begin{equation}
\begin{split}    
x(N) 
&= 
\left(
\prod_{n=0}^{N-1} \bar w_n 
\right) 
\\
& 
\times 
\left(
x(0) + \sum_{m=0}^{N-1} 
(\bar w_0)^{-1} \cdots 
(\bar w_m)^{-1} \bar b_m
\right) .
\end{split}
\end{equation}

When two neural networks ${\rm NN}_A$ and ${\rm NN}_B$ are related 
by the transformation~\eqref{eq:lin-nn-gauge}, 
we write ${\rm NN}_A \sim {\rm NN}_B$
and say that they are equivalent. 
Similarly, when two neural ODEs ${\rm NODE}_A$ and ${\rm NODE}_B$ 
have the same input-output relation, we write 
${\rm NODE}_A \sim {\rm NODE}_B$
and say that they are equivalent. 
These are equivalence relations.

We argue that the redundancies 
\eqref{eq:lin-nn-gauge}
in linear neural networks are inherited from the gauge symmetries given by spacetime diffeomorphisms in the neural-ODE version of it.
Specifically, we have the following statement:
\begin{theorem} 
\label{thm:node-nn}
Let ${\rm NN}_A$ be a linear neural network that is obtained by the discretization of a neural ODE ${\rm NODE}_A$
and 
Let ${\rm NN}_B$ be a linear neural network 
equivalent to ${\rm NN}_A$ (i.e., 
${\rm NN}_B \sim {\rm NN}_A$).
%
Then, there exists a neural ODE ${\rm NODE}_B$
equivalent to ${\rm NODE}_A$ whose discretization gives ${\rm NN}_B$.
\end{theorem}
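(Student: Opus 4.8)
\emph{Proof plan.} The plan is to realize ${\rm NODE}_B$ concretely as a spacetime diffeomorphism of ${\rm NODE}_A$, obtained by interpolating the discrete data $\{G_n\}_{n=0}^N$, $\{c_n\}_{n=0}^N$ that witnesses ${\rm NN}_B \sim {\rm NN}_A$ through \eqref{eq:lin-nn-gauge}. First I would pick smooth curves $G\colon[0,T]\to GL(d)$ and $c\colon[0,T]\to\mathbb R^d$ with $G(n\Delta)=G_n$ and $c(n\Delta)=c_n$ for all $n=0,\dots,N$; since $G_0=G_N=1$ and $c_0=c_N=0$, the endpoint conditions $G(0)=G(T)=1$, $c(0)=c(T)=0$ of \eqref{eq:reparam-lin-node} then hold automatically. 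I would then define ${\rm NODE}_B$ to be the neural ODE obtained from ${\rm NODE}_A$ by the gauge transformation \eqref{eq:w-b-transfm} with this $(G(t),c(t))$. Because \eqref{eq:w-b-transfm} keeps $x(T)$ intact (as noted right after \eqref{eq:w-b-transfm}; equivalently, this is the finite, linear instance of Theorem \ref{thm:1}), we already have ${\rm NODE}_B\sim{\rm NODE}_A$.

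The remaining task is to verify that discretizing ${\rm NODE}_B$ returns exactly ${\rm NN}_B$, and here I would argue at the level of trajectories. The integrated relation \eqref{eq:params-nn-node} is precisely the statement that $(\bar w_n,\bar b_n)$ is the affine solution map $x(n\Delta)\mapsto x((n+1)\Delta)$ of the linear ODE on $[n\Delta,(n+1)\Delta]$; hence a sequence solves the recursion \eqref{eq:linear-nn-rec} of a discretized neural ODE iff it is the sampling $\{x(n\Delta)\}$ of some solution of that ODE. Now, as in the derivation of \eqref{eq:w-b-transfm}, a solution $\tilde x(t)$ of ${\rm NODE}_A$ corresponds to the solution $x(t)=G(t)^{-1}\tilde x(t)-G(t)^{-1}c(t)$ of ${\rm NODE}_B$. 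Sampling at $t=n\Delta$ gives $x(n\Delta)=G_n^{-1}\tilde x(n\Delta)-G_n^{-1}c_n$; since $\{\tilde x(n\Delta)\}$ solves the ${\rm NN}_A$ recursion, this is exactly the image of that trajectory under \eqref{eq:lin-nn-gauge} (whose induced action on trajectories is $\tilde x_n\mapsto G_n^{-1}\tilde x_n-G_n^{-1}c_n$), so $\{x(n\Delta)\}$ solves the ${\rm NN}_B$ recursion. The same sequence also solves the recursion of the discretization of ${\rm NODE}_B$, and $x(0)=\tilde x(0)$ ranges over all of $\mathbb R^d$; two affine recursions whose linear parts are invertible (each $\bar w_n$ is a conjugate of an ordered exponential, hence invertible) and which share all solution sequences must have identical coefficients. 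Therefore the discretization of ${\rm NODE}_B$ is ${\rm NN}_B$, which proves the claim. As a consistency check, the gauge-covariance identity ${\rm T}\,\e^{\int_a^b w'}=G(b)^{-1}\big({\rm T}\,\e^{\int_a^b w}\big)G(a)$ for $w'=G^{-1}wG-G^{-1}\dot G$, obtained by differentiating $t\mapsto G(t)^{-1}({\rm T}\,\e^{\int_a^t w})G(a)$, reproduces $\bar w_n^B=G_{n+1}^{-1}\bar w_n^A G_n$ directly.

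The one step I expect to need genuine care is the interpolation. Smooth curves through prescribed grid values, with the prescribed endpoint values and with all derivatives vanishing at the grid points so the interval pieces glue $C^\infty$, exist by standard bump-function constructions; the only real constraint is that $G(t)$ stay invertible throughout, which forces every $G_n$ into the identity component $GL_+(d)$, a genuine topological restriction if some $\det G_n<0$. One should therefore either restrict \eqref{eq:lin-nn-gauge} to $GL_+(d)$ (the case relevant to the positive rescalings in \eqref{gsdnn}) or allow complex-linear reparametrizations. The remaining ingredients — the trajectory correspondence and the fact that matching solution sets force matching affine recursions — are routine.
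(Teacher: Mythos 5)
Your proposal follows essentially the same route as the paper's Appendix~\ref{app:proof-linear-nn}: interpolate the discrete gauge data $\{G_n,c_n\}$ to smooth $\{G(t),c(t)\}$ with the endpoint conditions, define ${\rm NODE}_B$ via the continuous gauge transformation \eqref{eq:w-b-transfm}, and verify that its discretization is ${\rm NN}_B$ using the covariance of the ordered exponential $W(w')_{t_1:t_2}=G^{-1}(t_1)W(w)_{t_1:t_2}G(t_2)$ (which the paper establishes by composing infinitesimal Wilson lines, and which you recover both from your trajectory argument and as your consistency check). Your one genuine addition is the observation that a continuous invertible interpolation with $G(0)=G(T)=1$ forces every $G_n$ into $GL_+(d)$ --- a topological restriction the paper's proof silently assumes and which is worth making explicit.
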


Intuitively, the theorem states that 
there is a spacetime diffeomorphism behind
a redundancy in discrete linear neural networks (see Fig.~\ref{fig:node-nn-diag} for schematic illustration). 
We provide a proof of this theorem in Appendix~\ref{app:proof-linear-nn}.

{\it Sketch of proof.} 
We make an explicit connection between
the redundancies~\eqref{eq:lin-nn-gauge} 
and the gauge symmetries in the neural ODE.
by matching the transformation 
parameters of the continuous and discrete models as 
$G_n = G(n \Delta)$
and 
$c_n = c(n \Delta)$. 

While the theorem above applies to the linear neural network with no activation, 
it is easy to generalize the statement to generic feedforward neural network with
ReLU activation; the only difference is the replacement of the transformation
parameter $G_n$ with the arbitrary diagonal matrix $\alpha^{(n)}$ whose diagonal entries are given by $\{\alpha^{(n)}_j\}$, as seen in \eqref{gsdnn}.\footnote{
We make a comment on generalities of our approach.
The redundancy characterized by our diffeomorphisms is with continuous parameters
and deformable to a trivial transformation smoothly,
while feedforward neural networks can have discrete redundancies such as permutations of latent nodes \cite{brea2019weight} (see \cite{wei2008dynamics,cousseau2008dynamics,amari2017dynamics,simsek2021geometry,entezari2021role} for special structure of weight space due to the permutation symmetry). 
In neural ODEs, the latter corresponds to exchanging two variables at specific points in time, which can be seen as a discrete gauge symmetry.
Recently in physics, discrete gauge symmetries have been studied extensively, and 
%
%
%
it would be interesting to generalize our results to such discrete symmetries.
}

\section{Transformers}
\label{sec5}

In this section, based on our integrated relation between the transformers \cite{vaswani2017attention} and the neural ODEs, we
find that a rescaling symmetry in a self-attention layer
can be understood as a spatial diffeomorphism.

\subsection{Rescaling symmetry in transformers}

Let us examine possible rescaling symmetries in a self-attention layer in transformers. In general, the self-attention layer is given as
\begin{align}
    h_I = \sum_{J=1}^n\varphi\left(
    (x_I W^{\rm (q)}) (x_J W^{\rm (k)})^T
    \right) (x_J W^{\rm (v)})
    \label{sa}
\end{align}
where $x \in {\mathbb R}^{n\times d}$ is a set of the data with $x_I \in {\mathbb R}^{d}$ $(I=1,2,\cdots, n)$ being the individual data, and $W^{\rm (q)}, W^{\rm (k)}, W^{\rm (v)}$ are the query, the key and the value weight, respectively, each of which takes a value in ${\mathbb R}^{d\times d}$. 

Normally in self-attention layers in transformers one takes 
softmax for the activation function $\varphi$, then there is no rescaling symmetry. 
Here, as we have seen in this paper, we use ReLU as $\varphi$ to find the rescaling symmetry.
In fact, there have been proposals to use ReLU for transformers \cite{choromanski2020rethinking,zhang2021sparse,qin2022cosformer,shen2023study,mirzadeh2023relu}.

The self-attention layer \eqref{sa} actually has the following rescaling symmetry. 
For any 
$A, B \in $ GL($d,{\mathbb R}$) and $\alpha \in {\mathbb R}_{>0}$ which satisfy
\begin{align}
    \sum_k A_i{}^k B_k{}^j = \alpha \delta_i{}^j,
\end{align}
the symmetry transformation is 
\begin{align}
    \begin{split}
    & W^{\rm (q)}{}_i{}^j \mapsto  \sum_k W^{\rm (q)}{}_i{}^k A{}_k{}^j,
    \\
    & W^{\rm (k)}{}_i{}^j \mapsto  \sum_k W^{\rm (k)} {}_i{}^kB^{T}{}_k{}^j,
    \\
    & W^{\rm (v)}{}_i{}^j \mapsto \alpha^{-1} W^{\rm (v)}{}_i{}^j.
    \end{split}
    \label{sagauge}
\end{align}
This is the gauge symmetry of the transformer.\footnote{
One can actually use this rescaling symmetry to gauge-fix some of the weights. For example, one can define a new weight matrix 
$    W^{\rm (qk)}{}_i{}^j := \sum_k
    W^{\rm (q)}{}_i{}^k [W^{\rm (k)}{}]^{T}{}_k{}^j $
which is invariant under the non-Abelian part of the rescaling symmetry. This roughly states that either the query weight or the key weight is unnecessary.}


\subsection{Non-linear neural ODE as a self-attention}
\label{subsec:node=sa}

In this subsection we identify the self-attention \eqref{sa} with an integrated nonlinear neural ODE, and find that a part of the rescaling symmetry \eqref{sagauge} is a spatial diffeomorphism.

%

Following \cite{lu2019understanding} (see also \cite{krotov2020large,zhong2022neural}), transformers can be understood as a discretized neural ODE, through the general relation of the discretization \cite{lu2018beyond}. What we propose here to relate the gauge symmetries is not the discretization, but an integration.
%
We consider a nonlinear correction to the linear neural ODE,
\begin{align}
    \dot{x}^i(t)= \sum_j w^i{}_j(t) x^j(t) + \sum_{j,k,l}\lambda^i{}_{jkl}(t) x^j(t)x^k(t)x^l(t)
    \label{xdxxx}
\end{align}
with the new parameter $\lambda$ treated perturbatively, $\lambda \ll 1$. To the leading order in $\lambda$, the integrated solution is
\begin{align}
    x^i(t) = \sum_j[W_{t:0}]^i{}_j x^j(0)
    + \sum_{j,k,l}\Lambda^i{}_{jkl} \, x^j(0)x^k(0)x^l(0)
    \nonumber
\end{align}
where $W$ is defined in Eq.~\eqref{eq:wilson-line} and 
\begin{align}
    \Lambda^i{}_{jkl}(t)&
    :=
    \sum_{i',i'',j',k',l'}
    [W_{t:0}]^i{}_{i'}\int_0^t\! dt'
    \biggl[
    [W_{0:t'}]^{i'}{}_{i''}
    \nonumber \\
    &
    \times
    \lambda^{i''}{}_{j'k'l'}(t') 
    [W_{t':0}]^{j'}{}_{j}
    [W_{t':0}]^{k'}{}_{k}
    [W_{t':0}]^{l'}{}_{l}
    \biggr].
    \nonumber
\end{align}
Now, for the fixed time period $t\in [0,T]$ integrated, consider a special case $[W_{T:0}]^i{}_j = \delta^i{}_j$. 
Then the solution of the nonlinear neural ODE \eqref{xdxxx} is
\begin{align}
    x^i(t) = x^i(0)
    + \sum_{j,k,l}\Lambda^i{}_{jkl} \, x^j(0)x^k(0)x^l(0).
    \label{solxxx2}
\end{align}
On the right hand side, the second term is a general form of the self-attention \eqref{sa} without the activation function, and the
first term is the additive term in the transformer.

In fact, it is possible to show that the transformer without the activation term is expressed as an integration of the neural ODE \eqref{xdxxx} as follows. 
We take an instantaneous nonlinearity,
\begin{align}
    \lambda^i{}_{jkl}(t) = \delta(t-t_0)\lambda^i{}_j \tilde{\lambda}_{kl}
    \label{tensorlambda}
\end{align}
with a tensor product of constants $\lambda^i{}_{j}$ and $\tilde{\lambda}_{kl}$, for $t_0\in (0,T)$.
Then, with the following identification
\begin{align}
\begin{split}
   & W^{\rm (q)}{}_{jm}=[W_{t_0:0}]^{m}{}_{j},\\
   &  W^{\rm (k)}{}_{km}=\sum_{k'}\tilde{\lambda}_{mk'}[W_{t_0:0}]^{k'}{}_{k},\\
   & W^{\rm (v)}{}_{li}=\sum_{i',j'}[W_{0:t_0}]^{i}{}_{i'}\lambda^{i'}{}_{j'}
    [W_{t_0:0}]^{j'}{}_{l},
\end{split}
\label{TrNODE}
\end{align}
we find that Eq.~\eqref{solxxx2} is rewritten as a transformer for $n=1$,
\begin{align}
    &x^i(T) = x^i(0)
    \nonumber \\
&    + \sum_{j,k,m,l}\!\!
\Bigl(x^j(0)W^{\rm (q)}{}_{jm} \Bigr)
   \Bigl(x^k(0)W^{\rm (k)}{}_{km}\Bigr)
   \Bigl(x^l(0)W^{\rm (v)}{}_{li}\Bigr).
   \nonumber
\end{align}
A generalization to the general token-length case $n> 1$ is straightforward, with the structural decomposition of the index $i$ to the pair $(i,I)$, and adopting the structure
\begin{align}
    \lambda^{iI}{}_{jJkKlL}
    = \lambda^i{}_j \tilde{\lambda}_{kl}
    \delta^I{}_K \delta_{JL}.
\end{align}
Thus, we conclude that the transformer without the activation can be viewed as an integrated nonlinear neural ODE.

Let us consider a spatial diffeomorphism $x^i(t)\mapsto \sum_j G^i{}_j(t) x^j(t)$. For the input-output relation to be unchanged, we require $G^i{}_j(0)=G^i{}_j(T)=\delta^i{}_j$. Then the transformation on the elements appearing in Eq.~\eqref{solxxx2} is equivalent to the rescaling symmetry for the self-attention layer, Eq.~\eqref{sagauge} with $\alpha=1$. The $\alpha$-scaling part of Eq.~\eqref{sagauge} is understood as an extra redundancy in the tensor decomposition \eqref{tensorlambda}, 
$\lambda^i{}_j \mapsto \alpha^{-1}\lambda^i{}_j$ and $\tilde{\lambda}_{kl} \mapsto \alpha \tilde{\lambda}_{kl}$.


\section{Regularization as a gauge fixing}
\label{sec6}

Gauge symmetries play an important role in learning dynamics: specifically, regularizations utilize the gauge symmetry to realize a model 
with desirable properties. 
In this section we see how physics intuition on gauge symmetries may be able to control the dynamics.
The gauge symmetry leaves the loss function intact, under a transformation of the weights.
Let us call the subspace of weights that can be reached by gauge transformations starting from a reference point $\theta$ as a {\it gauge orbit} represented by $\theta$.
The gradient of the loss function becomes orthogonal to the gauge orbits, directing training along a vector normal to these gauge orbits \cite{kunin2020neural}.
In other words, a gauge is chosen once a set of initial weights is picked.
Introducing a regularization that breaks the gauge symmetry accounts for a {\it gauge fixing condition} in physics, which results in weight updates also along the gauge orbits. Below we build some physics intuition for regularizations.

In neural ODEs, a regularization term motivated by the optimal transport problem has been introduced~\cite{finlay2020train},\footnote{It would be interesting to find a physical interpretation of the neural ODE regularizer using random time sampling \cite{ghosh2020steer}.} in which the optimal transport should bring $x(t)$ linear in $t$, {\it i.e.}~the motion is driven to be a linear uniform motion {\it a la} Galileo Galilei. Let us work out the condition of the linear uniform motion. Looking at the linear neural ODE \eqref{eq:dot-x-wx-b2}, we find that the condition
is equivalent to
\begin{align}
    \frac{d}{dt}
    \left[
    \sum_j w_i^{\,\, j}(t) x_j(t) + b_i(t)\right]=0.
\end{align}
Expanding this equation and using again \eqref{eq:dot-x-wx-b2}, we find that, for this to be valid for any data $x$, 
\begin{align}
     \dot{w}+w^2 = 0 , \quad
     (\partial_t +w)b=0. 
\end{align}
The regularizer with an $L^2$-norm is
\begin{align}
    L_{\rm R} = \lambda \int_0^T dt
    \left[
    (\dot{w}+w^2)^2
    +\right[(\partial_t +w)b\left]^2
    \right],
    \label{R}
\end{align}
where $\lambda$ is a hyperparameter for the regularizer strength.
This regularizer \eqref{R} allows a physics interpretation. 
As we have mentioned, $w$ and $b$ are interpreted as a gauge field and a Higgs field. 
The first term in Eq.~\eqref{R} is a so-called $R_\xi$-gauge term in gauge quantum field theories.
The second term is a Higgs kinetic term with a gauge covariant derivative.\footnote{This term is not gauge invariant, as the present gauge transformation is not of a phase rotation but the rescaling.
If we restrict the transformation parameter $G(t)$ to be just a spatial rotation (the special orthogonal group for the diffeomorphism transformation), we can write 
a gauge-invariant action for the Higgs kinetic term which 
respects the rotation symmetry but does not respect an arbitrary scaling symmetry. 
It works as a weak regularizer respecting the rotation symmetry.} With this physics 
intuition one can introduce different versions of regularizers which physicists explored
in formulating quantum field theories.

%

Supposing that we do not introduce any regularization to break the gauge symmetry, and train the neural ODE to reach a trained model.\footnote{As a side remark, we make a brief comment on conserved quantity. According to Noether's theorem \cite{Noether1918}, a symmetry with constant and continuous transformation parameters (called a global symmetry in physics) leads to the existence of a conserved current. In our case, due to the constraint \eqref{eq:epsilon-0} and \eqref{eq:epsilon-T} there is no such symmetry, thus we do not expect any conserved quantity at the moment. As for the quantity conserved during the training time, see \cite{kunin2020neural,tanaka2021noether}.} 
We may regard that this is a spontaneous symmetry breaking \cite{nambu1961dynamical1,nambu1961dynamical2}, since a certain nonzero Higgs field configuration $b(t)$ as an initial configuration or as a learning dynamics is used. In particular, the trained model corresponds to a vacuum in classical gauge field theory of the gauge field $w(t)$ and the Higgs field $b(t)$. Due to the gauge symmetry, there exist an infinite number of other vacua, which forms a continuous family, the vacuum gauge orbit. The trained model is a random sampling of the vacuum gauge orbit.\footnote{In physics, spontaneous symmetry breaking of gauge symmetries leads to Higgs phenomenon \cite{higgs1966spontaneous} where gauge bosons acquire a mass. In the present case the system is in 1 dimension (of time) and there is no physical degree of freedom for the gauge boson (no kinetic term can be written), thus resulting in no Higgs mechanism in neural ODEs.}

\section{Conclusions}
\label{sec7}

In this paper we have found that the gauge symmetry of neural ODEs is spacetime diffeomorphism, and use it for interpreting the rescaling symmetries present in general feedforward neural networks and transformers. This is possible through the integrated relation between the continuous neural ODEs and the discrete architecture of neural networks. Although a part of the diffeomorphism is often explicitly broken for concrete architectures and also with popular regularizations, the symmetry perspective of neural network bridges the machine learning and physics, which will help opening the black box of inner-working mechanism of neural networks.

Fundamental physics world, emanating from the gauge symmetry viewpoint, applies to diverse generalities of physical models. Supported by this physical diversity, one can attempt to construct novel machine learning model with physically controlled dynamics. At such an occasion, it is crucial to
secure the symmetry protected under the mapping between physical models (ODEs) and the excellent machine learning models such as transformers. Our findings contribute to securing the bridge, for future models.



\section{Impact statement}
\label{sec8}
This paper introduces a novel approach to understanding neural networks by unifying symmetries within them, particularly in Transformer models and neural ODEs, through the application of gauge symmetry concepts from physics. Our work provides a new perspective in comprehending the complex behaviors of deep learning models by integrating physical principles into machine learning. We believe this research contributes to advancing the field of Machine Learning and Physics, offering insights that could lead to innovative developments. While our study focuses on theoretical aspects, we acknowledge the potential societal implications of advancements in AI, and emphasize the importance of ethical considerations and responsible use of such technologies.

\section*{Acknowledgements}

K.~H.~would like to thank D.~Berman, S.~Sugishita and N.~Yokoi for valuable discussions, and String Data 2023 conference at Caltech for hospitality.
The work of K.~H.~was supported in part by JSPS KAKENHI Grant Nos.~JP22H01217, JP22H05111, and JP22H05115.
The work of Y.~H.~was supported in part by JSPS KAKENHI Grant No. JP22H05111. 
The work of A.~S.~was supported in part by JSPS KAKENHI Grant No. JP20K03743, JP23H04484 and JST PRESTO JPMJPR2123.

\nocite{langley00}

\bibliography{paper}
\bibliographystyle{icml2024}

\newpage
\appendix
\onecolumn

 \section{Ethics review}

This paper focuses on the theoretical aspects of integrating symmetries within neural networks, specifically Transformer models and neural ODEs. We believe our work does not raise specific ethical issues beyond the general scope of those applied to machine learning research. We are committed to the responsible development and use of AI technologies and adhere to ethical guidelines in our research. We acknowledge the importance of ongoing consideration of ethical implications as the field of AI continues to evolve.

\section{ Proof of Theorem \ref{thm:node-nn} }\label{app:proof-linear-nn}

We here give the proof of Theorem~\ref{thm:node-nn}. 

We first show that a ``Wilson line'' operator defined in Eq.~\eqref{eq:wilson-line} is transformed by a gauge transformation as
\begin{equation}
W(w')_{t_1:t_2} 
=
G^{-1}(t_1)
W(w)_{t_1:t_2} 
G(t_2). 
\label{eq:wilson-line-transfm}
\end{equation}
To show this, we look at the transformation property of infinitely short Wilson line,
$W(w)_{(t+\Delta):t} 
= 1 + w(t) \Delta$,
where $\Delta \in \mathbb R$ is infinitesimal.
Under a gauge transformation,
\begin{equation}
\begin{split}
W(w')_{(t+\Delta):t}    
&= 
1 + w'(t) \Delta
\\
&= 
1 + 
G^{-1}(t) w(t) G(t) \Delta - 
G^{-1}(t) \dot G (t) \Delta .
\end{split}
\end{equation}
Using 
$G^{-1}(t) \dot G (t) =- \dot{G}^{-1}(t) G (t)$,
we have 
\begin{equation}
W(w')_{(t+\Delta):t}
= 
G^{-1} (t+\Delta)
\left[ 1 + w(t) \Delta \right] G (t)
+ O(\Delta^2) . 
\label{eq:wilson-line-delta-tr} 
\end{equation}
The Wilson lines \eqref{eq:wilson-line-transfm} with finite time lengths 
can be decomposed into the product of 
$W(w)_{(t+\Delta):t}$, 
and the transformation property \eqref{eq:wilson-line-transfm}
follows from Eq.~\eqref{eq:wilson-line-delta-tr}.

The parameters of the discrete linear neural network is expressed 
in terms of those of ${\rm NODE}_{A}$ as 
Eq.~\eqref{eq:params-nn-node}.
Using the relation~\eqref{eq:wilson-line-transfm}, 
one can easily show that a diffeomorphism of the linear neural ODE
parametrized by 
$\{G(t), c(t) \}_{t \in [0,T]}$ 
induces a transformation given by Eq.~\eqref{eq:lin-nn-gauge}
of the discrete linear neural network 
with $G_n = G(n \Delta)$ and 
$c_n = c(n \Delta)$.

By assumption, there is a set of 
transformation parameters 
$\{G_n, c_n \}_{n=0,\ldots,N-1}$ 
that connects discrete neural network  
${\rm NN}_A$ and ${\rm NN}_B$. 
We can choose the parameters 
$\{G(t), c(t) \}_{t \in [0,T]}$ 
for a diffeomorphism 
such that $G_n = G(n \Delta)$
and $c_n = c(n \Delta)$. 
Applying a diffeomorphism with these parameters to ${\rm NODE}_A$, 
we obtain ${\rm NODE}_B$ of desired properties.
%
%
This concludes the proof.

\section{Rescaling symmetry in convolutional neural networks}
\label{appB}

In this appendix, we study the rescaling symmetries in convolutional neural networks and graph neural networks.

A generic convolutional layer is defined as
\begin{align}
    &x_{ij}^{(n+1)} = \varphi(u_{ij}^{(n)}), \\
    &u_{ij}^{(n)} = \sum_{p=0}^{f-1}\sum_{q=0}^{f-1} x_{i+p, j+q}^{(n)} h_{pq}^{(n)}.
\end{align}
Here $h$ is a filter of the size $f \times f$. In generic convolutional neural networks, one uses a pooling layer. The $L_s$ pooling layer is
\begin{align}
    y_{ij}^{(n+1)}
     = \left[
     \frac{1}{{\rm dim}(P)}\sum_{(p,q)\in P} (y^{(n)}_{pq})^s
     \right]^{1/s},
\end{align}
and $s=1$ corresponds to the average pooling, while $s=\infty$ corresponds to the max pooling.

It is easy to find that the following scaling symmetry exists in the successive convolutional layers,
\begin{align}
\begin{split}
    & h_{p,q}^{(n)} \mapsto \alpha^{(n)} h_{p,q}^{(n)}
    \\
    &    h_{p,q}^{(n+1)} \mapsto [\alpha^{(n)}]^{-1} h_{p,q}^{(n+1)}
    \end{split}
    \label{cnngauge}
\end{align}
with $\alpha^{(n)} \in {\mathbb R}_{>0}$.
The scaling symmetry survives 
with any insertion of the $L_s$ pooling layer with any $s$ in the set of the convolutional layers.

The dimensions of the rescaling symmetry is greatly reduced compared to that
of the fully connected deep neural network \eqref{gsdnn}, since the gauge transformation parameter $\alpha^{(n)}$ is shared for any unit in a given convolutional layer. This is due to the translation symmetry imposed on the convolutional layer, as the filter $h$ needs to be shared for any grid of the input.

One additional comment is that once a normalization layer is inserted the scaling rescaling symmetry is ruined, as the normalization process kills the scaling.

We immediately see that the rescaling symmetry in the convolutional neural networks \eqref{cnngauge} should be also there for graph convolutional network, as the latter is regarded as a sparse version of the convolutional neural networks.

\end{document}